\pretocmd{\thebibliography}{\setlength{\itemsep}{0pt}}{}{}
\newtheorem{Theorem}{Theorem}
\newtheorem{Lemma}{Lemma}
\newcommand{\doi@aux}[1]{\appto\@currentHref{#1}}
\title{Expected Free Energy-based Planning \\ as Variational Inference}
\author[1,2]{Bert de Vries\thanks{bert.de.vries@tue.nl}}
\author[1,2]{Wouter Nuijten}
\author[1]{Thijs van de Laar}
\author[1]{Wouter Kouw}
\author[1]{Sepideh Adamiat}
\author[1]{Tim Nisslbeck}
\author[1]{Mykola Lukashchuk}
\author[1]{Hoang Minh Huu Nguyen}
\author[1]{Marco Hidalgo Araya}
\author[1]{Rapha\"{e}l Tr\'{e}sor}
\author[1]{Thijs Jenneskens}
\author[1]{Ivana Nikoloska}
\author[1,3]{Raaja Ganapathy Subramanian}
\author[1,2]{Bart van Erp}
\author[1,2]{Dmitry Bagaev}
\author[1,2]{Albert Podusenko}
\affil[1]{Eindhoven University of Technology, Eindhoven, the Netherlands}
\affil[2]{Lazy Dynamics B.V., Eindhoven, the Netherlands}
\affil[3]{ASML, Veldhoven, the Netherlands}
\begin{document}

\maketitle

\begin{abstract}
We address the problem of planning under uncertainty, where an agent must choose actions that not only achieve desired outcomes but also reduce uncertainty. Traditional methods often treat exploration and exploitation as separate objectives, lacking a unified inferential foundation. Active inference, grounded in the Free Energy Principle, provides such a foundation by minimizing Expected Free Energy (EFE), a cost function that combines utility with epistemic drives, such as ambiguity resolution and novelty seeking. However, the computational burden of EFE minimization had remained a significant obstacle to its scalability. In this paper, we show that EFE-based planning arises naturally from minimizing a variational free energy functional on a generative model augmented with preference and epistemic priors. This result reinforces theoretical consistency with the Free Energy Principle by casting planning under uncertainty itself as a form of variational inference. Our formulation yields policies that jointly support goal achievement and information gain, while incorporating a complexity term that accounts for bounded computational resources. This unifying framework connects and extends existing methods, enabling scalable, resource-aware implementations of active inference agents.
\end{abstract}

\textbf{Keywords:} Active Inference, 
Bounded Rationality, Epistemic Uncertainty, Expected Free Energy, Free Energy Principle, Planning as Inference, Policy Optimization, Variational Inference

\tableofcontents

\section{Introduction}

Planning under uncertainty is a fundamental challenge in both artificial intelligence and cognitive neuroscience. Agents must select actions that not only achieve desired outcomes but also reduce uncertainty about their environment. Classical approaches—rooted in reinforcement learning and optimal control—typically address this by optimizing long-term utility through value function estimation or policy learning \cite{sutton2018reinforcement, bertsekas2012dynamic}. However, these methods often treat reward maximization (exploitation) and uncertainty reduction (exploration) as separate objectives, using heuristics to strike a balance between them. Moreover, they struggle in high-dimensional or deep-horizon settings due to compounding errors and the curse of dimensionality.

Active inference offers a principled alternative. Grounded in the Free Energy Principle (FEP)\footnote{We use the following abbreviations in this paper: Expected Free Energy (EFE), Free Energy Principle (FEP),  Kullback-Leibler (KL),  Planning as Inference (PAI), Variational Free Energy (VFE). }, it casts perception, learning, and action selection as inference processes that minimize a variational bound on surprise \cite{friston2010free, parr2022active}. Central to this framework is the Expected Free Energy (EFE), a unified objective that combines instrumental (goal-directed) and epistemic (information-seeking) components \cite{friston2015active}. Minimizing EFE yields behavior that simultaneously pursues preferred outcomes and resolves uncertainty, providing a theoretically grounded solution to the exploration–exploitation trade-off.

Despite its promise, practical implementations of EFE-based planning remain computationally demanding \cite{kappen2012optimal, palmieri_unifying_2022,van_de_laar_realizing_2024, friston_sophisticated_2021, Paul-Predictive-planning-2024}. Existing methods often resort to approximations that undermine alignment with the FEP, particularly the foundational claim that all processing arises from variational free energy minimization. In this work, we address these limitations.

We show that EFE-based planning can be rigorously formulated as variational inference on a generative model augmented with preference and epistemic priors. Our central result demonstrates that minimizing a well-defined variational free energy functional naturally yields policies that integrate goal-directed behavior, information-seeking exploration, and bounded rationality. This formulation improves full theoretical alignment with the FEP, and unifies active inference with the broader planning-as-inference paradigm, offering a scalable and principled framework for decision-making under uncertainty.

The next section introduces the formal definition of EFE and highlights its desirable properties for planning under uncertainty. Section~\ref{sec:related-work} reviews prior work on EFE minimization and outlines several limitations of existing approaches. The central contribution of this paper—demonstrating how EFE minimization can be recast as standard variational inference—is presented as a formal theorem in Section~\ref{sec:EFE-theorem}. The paper concludes with a discussion of the theorem’s implications and its relevance for building scalable active inference agents.

\section{The Expected Free Energy Cost Function}\label{sec:EFE-cost-function}

Consider an agent described by a generative model $p(yx\theta u)$.\footnote{For brevity, we omit commas in the notation for joint variables, e.g., $p(yx\theta u) = p(y, x, \theta, u) $.}  In this paper, we are only concerned with planning, so we will assume that the model predicts a sequence of future observations. A typical example of this model would be a rollout of a state space model, for instance
\begin{align}\label{eq:generative-model}
    p(yx\theta u) = p(x_t) p(\theta)\underbrace{\prod_{k=t+1}^T p(y_k|x_k,\theta) p(x_k|x_{k-1},u_k) p(u_k)}_{\text{rollout to the future}}
\end{align}

where $t$ holds the current time step. In this model, $y$ denotes the sequence of future observations, $x$ represents the (latent) states, $\theta$ contains the model parameters, and $u$ refers to the policy, i.e., a sequence of future actions (controls). Because all of these variables are defined as part of a model rollout into the future, they are all treated as unobserved variables. Since \eqref{eq:generative-model} is designed to predict how the future is expected to unfold, we refer to it as the predictive model. 

In model \eqref{eq:generative-model}, the prior distribution $p(u)$ can be understood as an empirical distribution over allowable policies based on contextual data. Assume that we are additionally provided with a distribution 
\begin{equation}
 \hat{p}(x)\,,   
\end{equation}
which describes the \emph{preferred} future states, sometimes referred to as target states. The \emph{planning} objective is to infer a policy posterior $q(u)$ that, if executed, would efficiently guide the agent to these preferred states.\footnote{Since $q(u)$ is conditioned solely on priors, namely the generative model $p(yx\theta u)$ and a preference prior $\hat{p}(x)$, it would be more appropriate to speak about an updated prior $q(u)$ rather than a posterior. For simplicity, in this paper, all distributions that result from inference are denoted by $q(\cdot)$ and termed posteriors.} 

In the active inference literature, candidate policies are evaluated by a cost function $G(u)$, known as the Expected Free Energy, which is defined as
\begin{equation}\label{eq:G=r+a-n}
   G(u) = \underbrace{E_{q}\bigg[\log \frac{q(x|u)}{\hat{p}(x)}\bigg]}_{\text{risk}} +   \underbrace{\underbrace{E_{q}\bigg[\log\frac{1}{q(y|x  )} \bigg]}_{\text{ambiguity}}  -  \underbrace{E_{q}\bigg[\log \frac{ q(\theta|y
   x)}{ q(\theta|x )} }_{\text{novelty}}\bigg]}_{\text{epistemic costs}}\,,
\end{equation}
where the expectations are with respect to $q = q(yx\theta|u)$.
In \eqref{eq:G=r+a-n}, the assumption is that $q(yx\theta u)$ is the variational posterior with respect to the joint model $p(yx\theta u)\hat{p}(x)$. In section~\ref{sec:EFE-theorem}, we will refine this statement. Policies with a lower $G(u)$ value are regarded as more favorable, i.e., a priori more likely to be selected. Active inference processes are based on the Free Energy Principle, a sophisticated theory grounded in core physics concepts that accounts for the behavior of living systems as if $G(u)$ were their planning cost function. The most contemporary reference is \citet{friston_path_2023}. On a more practical level, we summarize this theory by discussing the incentives behind the three components of $G(u)$. 
\begin{itemize}
    \item \textbf{Risk} refers to the KL-divergence between $q(x|u)$, which is the state that we expect to reach under policy $u$, and the target state $\hat{p}(x)$. EFE minimization aligns with risk minimization.   
    \item \textbf{Ambiguity} is the expected entropy $E_{q(x|u)}\big[H[q(y|x)] \big]$ of future observations $y$ under policy $u$. Minimizing EFE results in policies that seek well-predicted (i.e., unambiguous) observations, leading to accurate state estimates. 
    \item \textbf{Novelty} extends information-seeking policies to include \emph{active} parameter learning. Minimizing EFE results in policies that maximize the mutual information between observations $y$ and parameters $\theta$.   
\end{itemize}

EFE minimization can be viewed as a unifying framework for planning under uncertainty, integrating principles from both decision theory and optimal control. Several established paradigms emerge as special cases of EFE minimization under specific assumptions. For example, Kullback-Leibler (KL) control \cite{todorov2006linearly, rawlik2013stochastic} arises when epistemic (information-seeking) terms are omitted, effectively reducing EFE to a risk-based utility optimization. Conversely, when the risk term is removed, EFE minimization reduces to Bayesian experimental design \cite{lindley1956bayesian}, which focuses purely on maximizing information gain.

As an aside, if preferences were instead described by a distribution $\hat{p}(y)$ over desired future observations, it is common to define an alternative EFE as

\begin{equation}\label{eq:G=p-s-n}
   G^\prime(u) = \underbrace{E_{q}\bigg[\log \frac{1}{\hat{p}(y)}\bigg]}_{\substack{\text{pragmatic}\\ \text{costs}}}    \underbrace{-\underbrace{E_{q}\bigg[\log\frac{q(x|y)}{q(x|u  )} \bigg]}_{\text{salience}}  -  \underbrace{E_{q}\bigg[\log \frac{ q(\theta|yx)}{ q(\theta|x )} }_{\text{novelty}}\bigg]}_{\text{epistemic costs}}\,,
\end{equation}
where, as before, the expectations are with respect to $q=q(yx\theta|u)$. Although $G(u)$ and $G^\prime(u)$ are distinct cost functions with different definitions of epistemic costs, they are centered around the same criteria. They can be regarded as specific instances of a broader policy cost function template given by 
\begin{equation}
 E_{q(yx\theta|u)}\bigg[\log \frac{q(x\theta|u)}{\bar{p}(yx\theta)}\bigg]
\end{equation}
where the denominator features a ``biased'' model $\bar{p}(yx\theta)$, incorporating both preference incentives $\hat{p}(\cdot)$ and variational posteriors $q(\cdot)$. Specifically, $G(u)$ in \eqref{eq:G=r+a-n} uses $\bar{p}(yx\theta) = q(\theta|yx) q(y|x) \hat{p}(x)$, and $\bar{p}(yx\theta)$ in \eqref{eq:G=p-s-n} factorizes as $\bar{p}(yx\theta) = q(\theta|yx) q(x|y) \hat{p}(y)$. In the following discussion, we focus on $G(u)$, noting that similar derivations and arguments apply to $G^\prime(u)$.

\section{Related Work}\label{sec:related-work}
We shortly review recent efforts on how to find policies that minimize EFE efficiently. 

In the Sophisticated Inference (SI) framework introduced by \cite{friston_sophisticated_2021}, a layer of recursive belief modeling is added to the EFE formulation. This enables deeper forms of planning, where agents consider not only “What will happen if I do this?” but also “What will I believe will happen if I do this?”—allowing for richer introspective evaluation of future outcomes.

While conceptually compelling, the computational implementation of SI relies on an explicit tree search over candidate policies. As the planning horizon increases, the number of possible policy sequences grows combinatorially, making the exhaustive evaluation of EFE increasingly intractable.

To address this scalability issue, Paul et al. proposed the Dynamic Programming Expected Free Energy (DPEFE) framework \cite{Paul-Predictive-planning-2024}. By leveraging dynamic programming principles, DPEFE computes expected free energy recursively, reducing the computational cost of long-horizon planning. This reformulation allows active inference agents to plan efficiently in more complex environments without sacrificing theoretical rigor.

A conceptual limitation of approaches such as Sophisticated Inference and dynamic programming-based policy selection is that they rely on explicitly designed, human-crafted algorithms for policy selection. This sits uncomfortably with the FEP, which posits that all cognitive and behavioral processes should emerge from the automatic, event-driven minimization of variational free energy. From this perspective, policy selection should ideally arise entirely through an inference process, rather than through externally imposed algorithmic procedures.

The motivation for adopting a Planning-as-Inference (PAI) perspective is not merely philosophical alignment with the FEP; it also stems from practical considerations. Specifically, policy selection should be interruptible—capable of producing a valid approximate result at any time—and should scale gracefully with available computational resources. These properties are naturally afforded by embedding policy selection within a reactive message-passing scheme on a factor graph, where each local message incrementally reduces free energy \citep{bagaev_reactive_2023}. In such a framework, computation is inherently flexible and distributed, and intermediate solutions retain semantic coherence. In contrast, algorithmic approaches based on procedural code—with nested loops and conditionals—lack this interruptibility and adaptability, making them ill-suited for real-time or resource-constrained settings.

The PAI framework, proposed initially by \cite{attias2003planning} and later extended by \cite{toussaint2009robot} and \cite{solway2012optimal}, reinterprets planning as a probabilistic inference problem: the goal is to infer action trajectories that are most consistent with prior preferences over outcomes. This perspective enables the use of approximate inference techniques, such as variational inference and message passing, to develop computationally efficient planning algorithms.

However, the mentioned PAI formulations focus on maximizing expected utility and do not explicitly incorporate epistemic value, i.e., the drive to reduce uncertainty, which is a defining feature of EFE–based approaches. As a result, their applicability in highly uncertain or partially observable environments is limited, as they lack a principled mechanism for information-seeking behavior.

\citet{palmieri_unifying_2022} introduced a comprehensive framework that unifies estimation and control through belief propagation on factor graphs, with a particular emphasis on path planning applications. Building on this perspective, \citet{ van_de_laar_realizing_2024} extended the PAI framework by integrating epistemic value into the policy evaluation process, enabling agents to account for both expected utility and information gain during planning. Specifically, they propose modifying the Variational Free Energy (VFE) by subtracting a mutual information term when inference is performed over future (i.e., planned) segments of the factor graph. This adjustment allows reactive message passing to naturally account for both instrumental and epistemic value, yielding an interruptible and entirely local inference procedure for evaluating candidate policies.

In contrast to above mentioned PAI methods, the approach proposed by Van de Laar and Koudahl yields results that align with EFE minimization, but unfortunately it also introduces some conceptual and practical challenges. Conceptually, it is somewhat inelegant to alternate between different cost functions depending on the location of computation within the factor graph. This bifurcation undermines the principle of a unified objective function underlying all natural inference processes. Practically, it complicates the design and implementation of inference toolkits: developers must now account for two distinct message computations for each node—one for standard inference and another for planning—thereby significantly increasing implementation complexity and reducing modularity.

Finally, outside the FEP community and more within the reinforcement learning literature, the recent work by \citet{lazaro2024planning} offers a compelling perspective on the relationship between planning and inference. Similar to our approach, their work highlights the role of entropy and information-seeking behavior in planning. The key distinction lies in the formulation of the inference objective: while \citet{lazaro2024planning} demonstrate that planning corresponds to a specific weighting of entropy terms within a general variational objective, we introduce a VFE functional for a generative model that is augmented with epistemic priors, which yields EFE-based planning as a natural consequence.

In the next section, we develop a PAI framework that is not only consistent with the FEP but also addresses some of the conceptual and practical limitations of the approaches discussed above.

\section{EFE-based Planning as Variational Inference}\label{sec:EFE-theorem}
The main contribution of the paper is described by a theorem, which we conveniently label as the Expected Free Energy theorem. 

\begin{Theorem}[Expected Free Energy Theorem]\label{the:EFE}
Consider an agent with generative (predictive) model
\begin{align}
    p(yx\theta u)\,,   
\end{align}
and prior beliefs 
\begin{equation}
\hat{p}(x)    
\end{equation}
about future desired states. 

Let the Variational Free Energy functional $F[q]$ be defined as
\begin{equation}\label{eq:VFE-for-planning}
F[q] \triangleq E_{q(yx\theta u)}\bigg[ \log \frac{ \overbrace{q(yx\theta u)}^{\text{posterior}} }{ \underbrace{p(yx\theta u)}_{\substack{\text{generative} \\ \text{model}} } \underbrace{\hat{p}(x)}_{\substack{\text{preference}\\ \text{prior}}} \underbrace{\tilde{p}(u) \tilde{p}(x) \tilde{p}(yx)}_{\text{epistemic priors}}} \bigg]\,,
\end{equation}
where the generative model in the denominator is augmented by both a preference prior $\hat{p}(\cdot)$ and epistemic priors $\tilde{p}(\cdot)$.

Let the epistemic priors be defined as
\begin{subequations}\label{eq:epistemic-priors}
\begin{align}
   \tilde{p}(u) &= \exp(H[q(x|u)]) \label{eq:epistemic-prior-u}\\
   \tilde{p}(x) &= \exp(-H[q(y|x)]) \label{eq:epistemic-prior-x}\\
   \tilde{p}(yx) &= \exp(D[ q(\theta|yx), q(\theta|x)])\,.\label{eq:epistemic-prior-xy}
\end{align}    
\end{subequations}

Then,
$F[q]$ decomposes as
\begin{align}\label{eq:F=G+complexity} 
F[q] = \underbrace{E_{q(u)}\big[ G(u)\big]}_{\substack{ \text{expected policy} \\ \text{costs} }}  + \underbrace{E_{q(yx\theta u)}\bigg[\log \frac{q(yx\theta u)}{p(yx\theta u)}\bigg]}_{\text{complexity}} \,.
\end{align}
where $G(u)$ is the expected free energy as defined in \eqref{eq:G=r+a-n}. 
In \eqref{eq:epistemic-priors}, $H[q] = E_q[-\log q]$ is the entropy functional, and $D[q,p] = E_q[\log q - \log p]$ is the Kullback-Leibler divergence (see Appendix~\ref{sec:entropy-kullback}).
\end{Theorem}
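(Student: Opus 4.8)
The plan is to expand the variational free energy $F[q]$ directly from its definition in \eqref{eq:VFE-for-planning} and collect terms. First I would use the logarithm of a product to split $F[q]$ into a sum of expectations: one term $E_{q}[\log \frac{q(yx\theta u)}{p(yx\theta u)}]$ (which is already the claimed complexity term, so it can be set aside immediately), plus $-E_{q}[\log \hat{p}(x)]$, plus the three epistemic contributions $-E_{q}[\log \tilde{p}(u)]$, $-E_{q}[\log \tilde{p}(x)]$, $-E_{q}[\log \tilde{p}(yx)]$. The goal is then to show that these four remaining terms sum to $E_{q(u)}[G(u)]$, with $G(u)$ as in \eqref{eq:G=r+a-n}.

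Next I would substitute the explicit forms of the epistemic priors from \eqref{eq:epistemic-priors}. Since $\tilde{p}(u) = \exp(H[q(x|u)])$, we get $-E_{q(u)}[\log \tilde{p}(u)] = -E_{q(u)}[H[q(x|u)]] = -E_{q(u)}E_{q(x|u)}[-\log q(x|u)] = E_{q(xu)}[\log q(x|u)]$. Similarly $-E_{q}[\log \tilde{p}(x)] = E_{q(x)}[H[q(y|x)]] = E_{q(yx)}[-\log q(y|x)] = E_{q(yx)}[\log \frac{1}{q(y|x)}]$, which is exactly the ambiguity term. And $-E_{q}[\log \tilde{p}(yx)] = -E_{q(yx)}[D[q(\theta|yx),q(\theta|x)]] = -E_{q(yx\theta)}[\log \frac{q(\theta|yx)}{q(\theta|x)}]$, which is exactly the (negated) novelty term. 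Finally, combining $E_{q(xu)}[\log q(x|u)] - E_{q(x)}[\log \hat{p}(x)] = E_{q(xu)}[\log \frac{q(x|u)}{\hat{p}(x)}]$ yields the risk term. Assembling these three pieces gives precisely $E_{q(u)}[G(u)]$ after recognizing that all expectations can be written under the marginal or conditional $q$-distributions consistent with $q=q(yx\theta|u)$ as used in \eqref{eq:G=r+a-n}. Adding back the complexity term that was set aside completes \eqref{eq:F=G+complexity}.

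The calculation itself is essentially bookkeeping, so the main obstacle is conceptual/notational rather than technical: one must be careful that the epistemic priors $\tilde{p}(u), \tilde{p}(x), \tilde{p}(yx)$ are \emph{defined in terms of} the variational posterior $q$, so $F[q]$ is not a standard free energy with a fixed generative model in the denominator — it is an implicitly defined functional. I would want to state clearly that \eqref{eq:VFE-for-planning} is to be read as a fixed-point/self-consistent definition (the $\tilde{p}(\cdot)$ factors are evaluated at the same $q$ over which the expectation is taken), and verify that every expectation appearing after the expansion only depends on $q$ through the appropriate marginals/conditionals so that the substitution of \eqref{eq:epistemic-priors} is legitimate. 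A secondary point to check is that $\hat{p}(x)$ appears with the right sign and that the split $\log \frac{q(x|u)}{\hat p(x)} = \log q(x|u) - \log \hat p(x)$ matches the risk term's definition in \eqref{eq:G=r+a-n} including the expectation being over $q(x|u)$ rather than the joint — these are routine but worth doing explicitly to make the decomposition airtight.
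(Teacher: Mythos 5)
Your proposal is correct and amounts to the same direct algebraic expansion the paper uses in Appendix~\ref{sec:proof-main-theorem}: you run the computation forward (substitute the epistemic priors and assemble $G(u)$ from the resulting entropy and KL expectations), whereas the paper conditions on $u$ first and extracts $G(u)$ by multiplying and dividing, showing the residual vanishes — but the term-by-term identities, e.g.\ $-E_{q}[\log \tilde{p}(u)] = E_{q(xu)}[\log q(x|u)]$, are identical. Your observation that the $\tilde{p}(\cdot)$ factors are defined in terms of $q$ itself, so that \eqref{eq:VFE-for-planning} is a self-referential functional rather than a standard free energy with a fixed generative model, is a valid caveat that the paper's proof leaves implicit.
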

\begin{proof}
    The proof of \eqref{eq:F=G+complexity} is given in Appendix~\ref{sec:proof-main-theorem}.
\end{proof}

A key consequence of \eqref{eq:F=G+complexity} is that minimizing $F[q]$ leads to reducing the expected policy costs $E_{q(u)}[G(u)]$, while also balancing this with the drive to reduce complexity, which are the costs associated with changing beliefs.

As shown in Appendix~\ref{sec:proof-main-theorem}, if one wishes to enforce normalization of the prior distributions, the epistemic prior $\tilde{p}(u)$ in \eqref{eq:epistemic-prior-u} can alternatively be chosen as
\begin{equation}
\tilde{p}(u) =
\sigma(H[q(x|u)]) \triangleq \frac{\exp(H[q(x|u)])}{\sum_{u'} \exp(H[q(x|u')])},
\end{equation}
where $\sigma$ denotes the softmax function applied to the entropy $H[q(x|u)]$. Similar constructions apply to the priors $\tilde{p}(x)$ and $\tilde{p}(xy)$. In that case, the variational free energy $F[q]$ is shifted by a constant that does not affect the location of its minimum, and thus does not influence the outcome of the optimization process.

\section{Discussion}

\subsection{Optimal Planning by Variational Inference}
Starting from \eqref{eq:F-C-2} (see proof in Appendix~\ref{sec:proof-main-theorem}), we can compute the optimal policy posterior through 
\begin{align}
  F[q] &= E_{q(u)}\bigg[ \log \frac{q(u)}{p(u)} 
    + G(u) +\underbrace{E_{q(yx\theta | u)} \big[\log \frac{q(yx\theta|u)}{p(yx\theta|u)}\big]}_{=C(u) \text{ (complexity)}}\bigg] \notag \\
    &= E_{q(u)}\bigg[ \log \frac{q(u)}{\exp\big(-P(u) -G(u) - C(u)\big)}
     \bigg] \,, \label{eq:F-with-G-and-B}
\end{align}
where $P(u) = -\log p(u)$ denotes the policy prior expressed as a cost function. Equation \eqref{eq:F-with-G-and-B} is a Kullback-Leibler divergence that is minimized for
\begin{align}
      q^*(u) &= \arg\min_q F[q] \notag \\
      &= \sigma\big(-P(u) - G(u) - C(u) \big)\,, \label{eq:q*-PGB}
\end{align}
where 
\begin{equation}\label{eq:sigmoid}
    \sigma(a)_k = \frac{\exp(a_k)}{\sum_{k'} \exp(a_{k'})}
\end{equation} is a normalized exponential function.  

Equation \eqref{eq:q*-PGB} is not new. A comparable formula for the optimal policy can be found in Equation 2.1 of \cite{friston_sophisticated_2021}. A main contribution of this paper is to demonstrate that the (previously established) optimal policy, given by \eqref{eq:q*-PGB}, can be obtained through standard variational minimization of an appropriately defined free energy functional $F[q]$.

\subsection{Interpretation of the Epistemic Priors}

The epistemic prior $\tilde{p}(u) = \exp(H[q(x|u)])$, introduced in \eqref{eq:epistemic-prior-u}, imposes a bias toward selecting policies that maximize the entropy over future states $x$. This reflects an information-seeking preference, as high entropy states indicate that the agent is maintaining flexibility—keeping future states open for adaptation. Additionally, the epistemic prior $\tilde{p}(x) = \exp(-H[q(y|x)])$ in \eqref{eq:epistemic-prior-x}, favors policies that reduce the uncertainty over future states by selecting observations that are informative about them. Together, $\tilde{p}(u)$ and $\tilde{p}(x)$ induce a bias toward ambiguity-minimizing behavior. Similarly, the priors $\tilde{p}(u)$ and $\tilde{p}(y, x)$ jointly shape a preference for policies that maximize novelty.

\subsection{On the Complexity Term \texorpdfstring{$C(u)$}{C(u)}}

In \eqref{eq:q*-PGB}, $P(u)$ and $G(u)$ reflect past and future information about the policy posterior $q(u)$, respectively. The complexity term $C(u)$ represents the discrepancy (expressed as a KL divergence) between the (inferred) variational posterior $q(yx\theta|u)$ and the (ideal) Bayesian posterior $p(yx\theta|u)$.

Is $C(u)$ simply an unavoidable cost lacking benefits since, in contrast to $P(u)$ and $G(u)$, it does not provide information regarding effective policies? Not quite. Inference of $q$ must be executed on a specific platform in a particular context that provides access to a certain set of computational resources. For example, the resources available for tracking a specific car in traffic may differ depending on the overall complexity of the traffic situation. The presence of bounded computational resources can be regarded as a constraint on the inference process. 

Typical inference constraints include mean-field assumptions on $q$, as well as assumptions about the posterior form (e.g., $q(u)$ must be Gaussian, even if $p(u)$ is not). Latency assumptions also apply; for instance, $q(u)$ may need to be executed within 5 milliseconds, regardless of the state of the inference process. 

While these inference constraints are not incorporated into the objective function $F[q]$ as seen in \eqref{eq:VFE-for-planning}, the term $C(u)$ can be understood as a drive to minimize the unavoidable effects of these inference constraints. In other words, due to the complexity term $C(u)$ in \eqref{eq:q*-PGB}, active inference agents that minimize \eqref{eq:VFE-for-planning} are Bayes-optimal planners for a given set of constraints. We refer the reader to \cite{senoz_variational_2021}for a discussion on how to express inference constraints explicitly in the objective function.

\subsection{PAI in a Synthetic Active Inference Agent}\label{sec:implementation}

We discuss here how the inference process within an active inference agent may proceed. Consider the generative model for a dynamical system given by
\begin{equation}
\prod_{k=1}^T p(y_k|x_k) p(x_k|x_{k-1},u_k)\,
\end{equation}
which is represented by the white nodes in the factor graph shown in Fig.\ref{fig:generative-model-with-preferences}. We assume that both the initial and desired final states of this system are constrained by priors $\hat{p}(x_0|x^+)$ and $\hat{p}(x_T|x^+)$, respectively. These priors are generated by a higher-level state $x^+$, and are visualized as orange (initial) and blue (target) nodes in Fig.\ref{fig:generative-model-with-preferences}.

At time step $k = 0$, the agent’s task is to infer a sequence of future actions $u_{1:T}$ such that the expected posterior $q(x_T|y_{1:T})$ over the final state closely matches the target distribution $\hat{p}(x_T|x^+)$. Inference proceeds entirely via spontaneous ("reactive") message passing in the factor graph, without external orchestration.

Fig.~\ref{fig:generative-model-with-preferences} illustrates the state of this process at time step $t$, following the execution of actions $u_{1:t}$ and the observation of outcomes $y_{1:t}$. The green and red terminal nodes representing future actions and future states correspond to epistemic priors. At time $t$, the system’s future rollout comprises the generative model terminated by both epistemic and target priors.

As time progresses, inference within this future model remains active: green and red epistemic priors are progressively replaced by posterior factors (depicted as small black boxes), each replacement introducing new opportunities for free energy minimization through message passing. Consequently, beliefs over the remaining policy $u_{t+1:T}$ continue to evolve as the system integrates new information.

This inference mechanism can, in principle, be executed entirely automatically using a reactive message passing toolbox such as \texttt{RxInfer}~\citep{bagaev_rxinfer_2023}. We intend to describe simulations of this process in forthcoming publications.

\begin{figure}[tb!]
\centering
\includegraphics[width=\textwidth]{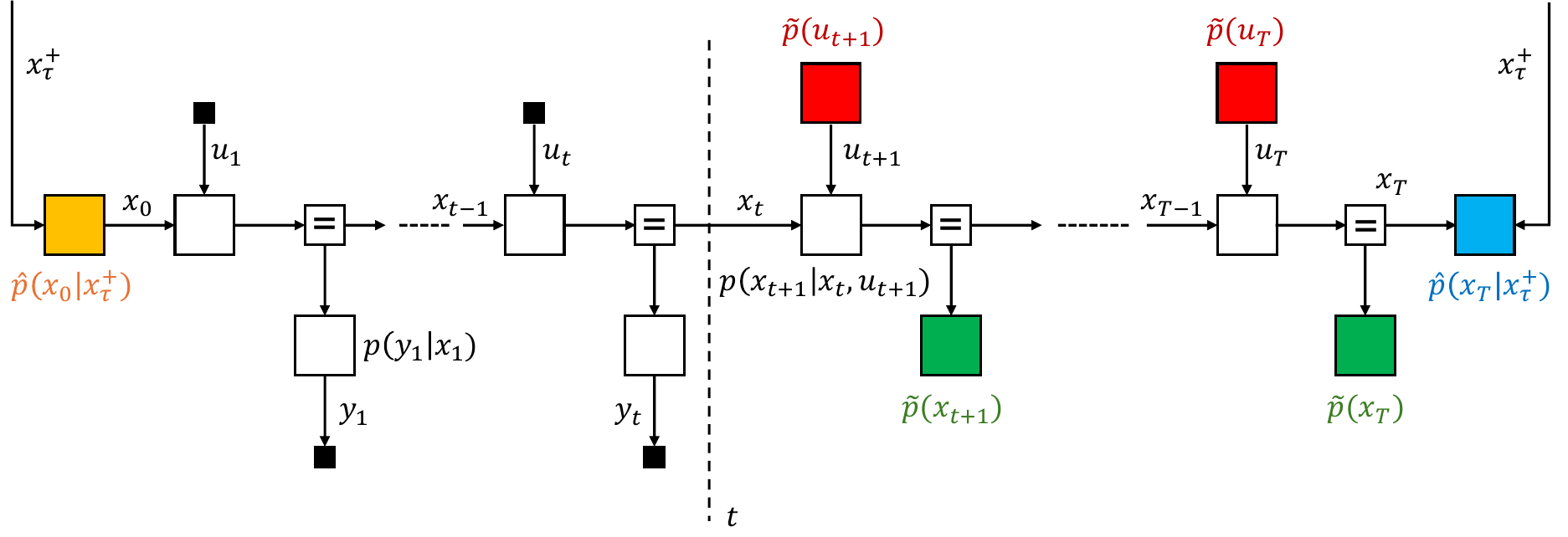}
\caption{The state of a (layer in an) active inference agent during the inference process. See section~\ref{sec:implementation} for details.}
\label{fig:generative-model-with-preferences}
\end{figure}

\subsection{Toward Scalable Synthetic Active Inference}\label{sec:scalable-AIF}

The results presented in this paper may also pave the way for scalable and energy-efficient active inference agents. In particular, we consider how the main theorem could be extended to support message computation at the level of individual nodes within a factor graph. This discussion outlines a promising direction for future research.

In~\cite{vries_toward_2023}, we proposed that the implementation of synthetic active inference agents should follow the procedure outlined in Algorithm~\ref{alg:reactive}. In essence, the inference process should not rely on any hand-crafted algorithms beyond the instruction to react whenever an opportunity arises to minimize (variational) free energy—provided the agent’s energy budget permits. This process is to be implemented via reactive message passing in a factor graph, enabling a fully autonomous and distributed inference mechanism.

\begin{figure}[!htb]
\begin{algorithm}[H]
\centering
\caption{Idealized Implementation of an Active Inference Agent}\label{alg:reactive}
\begin{algorithmic}[1]
\State \textbf{Specify initial model} $p(y,x,\theta,u)$ and \textbf{priors} $\hat{p}(x)$
\While {\textrm{true}} \Comment{Deployment loop}
\State React to any free energy minimization opportunity
\EndWhile
\end{algorithmic}
\end{algorithm}
\caption{Pseudo-code for realizing a synthetic active inference agent. See section~\ref{sec:scalable-AIF} for details.}
\label{fig:AIF-algorithms}
\end{figure}

A key phrase here is “react to any opportunity,” which underscores that the agent (or at a finer level of abstraction, any node in the factor graph) should
only compute anything when there is an actionable opportunity to minimize free energy. If we interpret the decision to compute (and send) a variational message versus not computing (and remaining silent) as an action choice,
then the EFE of these alternatives can serve as a decision criterion. A message should only be computed and sent if the EFE associated with doing so is lower than the EFE of abstaining. The EFE Theorem opens the door to evaluating the EFE of action choices via standard variational free energy
minimization within an appropriately extended generative model. Thus, we foresee an energy-efficient and fully autonomous active inference process, driven solely by localized variational free energy minimization. In this framework, hand-crafted planning algorithms based on ad hoc pruning in a tree search process are replaced by an autonomously operating Bayes-optimal reactive inference process. A toolbox like RxInfer would, in principle, be
capable of automating this process \cite{bagaev_rxinfer_2023}.

\subsection{Limitations}

This work also raises several open questions. While, in theory, it is possible to rank policy alternatives by EFE through VFE minimization on a generative model equipped with customized epistemic priors, the current results remain conceptual: implementation details are lacking, and validation through simulations has yet to be performed. To simplify the mathematical exposition, we omitted time indices from the variables; however, incorporating time explicitly, as would be required in a full dynamical system specification, may introduce additional complexity that warrants further elaboration.

Moreover, although the epistemic priors in \eqref{eq:epistemic-priors} are given in closed-form expressions, their practical implementation and online updating procedures are not straightforward. In a factor graph framework, for example, message passing through nodes representing these epistemic priors would likely require pre-computation or approximation strategies.

In summary, we view the contributions presented in this paper as a conceptual foundation for a line of research aimed at realizing PAI within active inference agents.

\section{Conclusions}

We have presented a principled formulation of planning under uncertainty by casting Expected Free Energy minimization as a problem of variational inference. Our central result shows that EFE-based policy optimization naturally emerges from minimizing a variational free energy functional defined over a generative model augmented with preference and epistemic priors. This formulation restores theoretical alignment with the Free Energy Principle, resolving previous challenges where planning and inference were treated as conceptually distinct operations.

By treating all inference, including policy selection, as message passing in a factor graph, our framework supports scalable, interruptible, and fully distributed planning. This perspective not only strengthens the theoretical foundation of active inference but also opens the door to practical implementations using reactive message-passing toolkits. These results pave the way for the design of synthetic active inference agents that are fully self-organizing and capable of performing Bayes-optimal planning without relying on handcrafted algorithms.

\bibliographystyle{plainnat}
\bibliography{bert-library}

\begin{thebibliography}{22}
\providecommand{\natexlab}[1]{#1}
\providecommand{\url}[1]{\texttt{#1}}
\expandafter\ifx\csname urlstyle\endcsname\relax
  \providecommand{\doi}[1]{doi: #1}\else
  \providecommand{\doi}{doi: \begingroup \urlstyle{rm}\Url}\fi

\bibitem[Attias(2003)]{attias2003planning}
Hagai Attias.
\newblock Planning by probabilistic inference.
\newblock In \emph{Advances in Neural Information Processing Systems}, volume~16, 2003.

\bibitem[Bagaev and de~Vries(2023)]{bagaev_reactive_2023}
Dmitry Bagaev and Bert de~Vries.
\newblock Reactive {Message} {Passing} for {Scalable} {Bayesian} {Inference}.
\newblock \emph{Scientific Programming}, 2023:\penalty0 e6601690, May 2023.
\newblock ISSN 1058-9244.
\newblock \doi{10.1155/2023/6601690}.
\newblock URL \url{https://www.hindawi.com/journals/sp/2023/6601690/}.
\newblock Publisher: Hindawi.

\bibitem[Bagaev et~al.(2023)Bagaev, Podusenko, and De~Vries]{bagaev_rxinfer_2023}
Dmitry Bagaev, Albert Podusenko, and Bert De~Vries.
\newblock {RxInfer}: {A} {Julia} package for reactive real-time {Bayesian} inference.
\newblock \emph{Journal of Open Source Software}, 8\penalty0 (84):\penalty0 5161, April 2023.
\newblock ISSN 2475-9066.
\newblock \doi{10.21105/joss.05161}.
\newblock URL \url{https://joss.theoj.org/papers/10.21105/joss.05161}.

\bibitem[Bertsekas(2012)]{bertsekas2012dynamic}
Dimitri Bertsekas.
\newblock \emph{Dynamic Programming and Optimal Control, volume 2, 4th edition}.
\newblock Athena Scientific, 2012.

\bibitem[De~Vries(2023)]{vries_toward_2023}
Bert De~Vries.
\newblock Toward {Design} of {Synthetic} {Active} {Inference} {Agents} by {Mere} {Mortals}.
\newblock \emph{CoRR}, abs/2307.14145, 2023.
\newblock \doi{10.48550/ARXIV.2307.14145}.
\newblock URL \url{https://doi.org/10.48550/arXiv.2307.14145}.
\newblock arXiv: 2307.14145.

\bibitem[Friston(2010)]{friston2010free}
Karl Friston.
\newblock The free-energy principle: a unified brain theory?
\newblock \emph{Nature Reviews Neuroscience}, 11\penalty0 (2):\penalty0 127--138, 2010.

\bibitem[Friston et~al.(2015)Friston, FitzGerald, Rigoli, Schwartenbeck, and Pezzulo]{friston2015active}
Karl Friston, Thomas FitzGerald, Francesco Rigoli, Philipp Schwartenbeck, and Giovanni Pezzulo.
\newblock Active inference and epistemic value.
\newblock \emph{Cognitive Neuroscience}, 6\penalty0 (4):\penalty0 187--214, 2015.

\bibitem[Friston et~al.(2021)Friston, Da~Costa, Hafner, Hesp, and Parr]{friston_sophisticated_2021}
Karl Friston, Lancelot Da~Costa, Danijar Hafner, Casper Hesp, and Thomas Parr.
\newblock Sophisticated {Inference}.
\newblock \emph{Neural Computation}, 33\penalty0 (3):\penalty0 713--763, March 2021.
\newblock ISSN 0899-7667.
\newblock \doi{10.1162/neco_a_01351}.
\newblock URL \url{https://doi.org/10.1162/neco_a_01351}.

\bibitem[Friston et~al.(2023)Friston, Da~Costa, Sakthivadivel, Heins, Pavliotis, Ramstead, and Parr]{friston_path_2023}
Karl Friston, Lancelot Da~Costa, Dalton A.~R. Sakthivadivel, Conor Heins, Grigorios~A. Pavliotis, Maxwell Ramstead, and Thomas Parr.
\newblock Path integrals, particular kinds, and strange things.
\newblock \emph{Physics of Life Reviews}, 47:\penalty0 35--62, December 2023.
\newblock ISSN 1571-0645.
\newblock \doi{10.1016/j.plrev.2023.08.016}.
\newblock URL \url{https://www.sciencedirect.com/science/article/pii/S1571064523001094}.

\bibitem[Kappen et~al.(2012)Kappen, G{\'o}mez, and Opper]{kappen2012optimal}
H.J. Kappen, V.~G{\'o}mez, and M.~Opper.
\newblock Optimal control as a graphical model inference problem.
\newblock \emph{Machine Learning}, 87\penalty0 (2):\penalty0 159--182, 2012.
\newblock \doi{10.1007/s10994-011-5252-8}.

\bibitem[Lindley(1956)]{lindley1956bayesian}
Dennis~V. Lindley.
\newblock Bayesian statistics and the design of experiments.
\newblock \emph{The Annals of Mathematical Statistics}, 27\penalty0 (2):\penalty0 568--578, 1956.
\newblock \doi{10.1214/aoms/1177728069}.
\newblock URL \url{https://projecteuclid.org/euclid.aoms/1177728069}.

\bibitem[Lázaro-Gredilla et~al.(2024)Lázaro-Gredilla, Ku, Murphy, and George]{lazaro2024planning}
Miguel Lázaro-Gredilla, Li~Yang Ku, Kevin~P. Murphy, and Dileep George.
\newblock What type of inference is planning?
\newblock In \emph{Advances in Neural Information Processing Systems}, 2024.
\newblock URL \url{https://proceedings.neurips.cc/paper_files/paper/2024/hash/d39e3ae9a11b79691709a7a6e06a63d9-Abstract-Conference.html}.

\bibitem[Palmieri et~al.(2022)Palmieri, Pattipati, Gennaro, Fioretti, Verolla, and Buonanno]{palmieri_unifying_2022}
Francesco A.~N. Palmieri, Krishna~R. Pattipati, Giovanni~Di Gennaro, Giovanni Fioretti, Francesco Verolla, and Amedeo Buonanno.
\newblock A {Unifying} {View} of {Estimation} and {Control} {Using} {Belief} {Propagation} {With} {Application} to {Path} {Planning}.
\newblock \emph{IEEE Access}, 10:\penalty0 15193--15216, 2022.
\newblock ISSN 2169-3536.
\newblock \doi{10.1109/ACCESS.2022.3148127}.

\bibitem[Parr et~al.(2022)Parr, Pezzulo, and Friston]{parr2022active}
Thomas Parr, Giovanni Pezzulo, and Karl Friston.
\newblock \emph{Active Inference: The Free Energy Principle in Mind, Brain, and Behavior}.
\newblock MIT Press, 2022.

\bibitem[Paul et~al.(2024)Paul, Isomura, and Razi]{Paul-Predictive-planning-2024}
Aswin Paul, Takuya Isomura, and Adeel Razi.
\newblock On predictive planning and counterfactual learning in active inference.
\newblock \emph{Entropy}, 26\penalty0 (6), 2024.
\newblock ISSN 1099-4300.
\newblock \doi{10.3390/e26060484}.
\newblock URL \url{https://www.mdpi.com/1099-4300/26/6/484}.

\bibitem[Rawlik et~al.(2013)Rawlik, Toussaint, and Vijayakumar]{rawlik2013stochastic}
Konrad Rawlik, Marc Toussaint, and Sethu Vijayakumar.
\newblock Stochastic optimal control as approximate inference: A new perspective.
\newblock \emph{Proceedings of the International Conference on Machine Learning (ICML)}, 2013.

\bibitem[Solway and Botvinick(2012)]{solway2012optimal}
Alec Solway and Matthew~M Botvinick.
\newblock Optimal behavioral hierarchy.
\newblock \emph{PLoS Computational Biology}, 8\penalty0 (10):\penalty0 e1002774, 2012.

\bibitem[Sutton and Barto(2018)]{sutton2018reinforcement}
Richard~S Sutton and Andrew~G Barto.
\newblock \emph{Reinforcement Learning: An Introduction, 2nd edition}.
\newblock MIT Press, Cambridge, MA, 2018.

\bibitem[Todorov(2006)]{todorov2006linearly}
Emanuel Todorov.
\newblock Linearly-solvable markov decision problems.
\newblock \emph{Advances in neural information processing systems}, 19:\penalty0 1369--1376, 2006.

\bibitem[Toussaint(2009)]{toussaint2009robot}
Marc Toussaint.
\newblock Robot trajectory optimization using approximate inference.
\newblock In \emph{Proceedings of the 26th Annual International Conference on Machine Learning}, pages 1049--1056. ACM, 2009.

\bibitem[van~de Laar et~al.(2024)van~de Laar, Koudahl, and de~Vries]{van_de_laar_realizing_2024}
Thijs van~de Laar, Magnus Koudahl, and Bert de~Vries.
\newblock Realizing {Synthetic} {Active} {Inference} {Agents}, {Part} {II}: {Variational} {Message} {Updates}.
\newblock \emph{Neural Computation}, pages 1--38, September 2024.
\newblock ISSN 0899-7667.
\newblock \doi{10.1162/neco_a_01713}.
\newblock URL \url{https://doi.org/10.1162/neco_a_01713}.

\bibitem[Şenöz et~al.(2021)Şenöz, van~de Laar, Bagaev, and de~Vries]{senoz_variational_2021}
İsmail Şenöz, Thijs van~de Laar, Dmitry Bagaev, and Bert de~Vries.
\newblock Variational {Message} {Passing} and {Local} {Constraint} {Manipulation} in {Factor} {Graphs}.
\newblock \emph{Entropy (Basel, Switzerland)}, 23\penalty0 (7):\penalty0 807, June 2021.
\newblock ISSN 1099-4300.
\newblock \doi{10.3390/e23070807}.

\end{thebibliography}


\appendix

\section{Proof of the Main Theorem}\label{sec:proof-main-theorem}

\begin{proof}[Proof of Theorem~\ref{the:EFE}]
\begin{subequations}
   \begin{align}
    F[q] &= E_{q(y x \theta u )}\bigg[ \log \frac{q(y x \theta u )}{p(y x \theta u)  \hat{p}(x) \tilde{p}(u) \tilde{p}(x)  \tilde{p}(yx)} \bigg] \\
    &= E_{q(u)}\bigg[ \log \frac{q(u)}{p(u)} 
    + \underbrace{E_{q(yx\theta | u)}\big[ \log \frac{q(y x \theta | u)}{p(yx \theta|u)  \hat{p}(x) \tilde{p}(u) \tilde{p}(x)  \tilde{p}(yx)}\big]}_{B(u)}  
     \bigg] \label{eq:F-C-1}\\
     &= E_{q(u)}\bigg[ \log \frac{q(u)}{p(u)} 
    + \underbrace{G(u) +E_{q(yx\theta | u)} \big[\log \frac{q(yx\theta|u)}{p(yx\theta|u)}\big]}_{B(u) \text{ if \eqref{eq:epistemic-priors} holds}}  
     \bigg] \label{eq:F-C-2} \\
    &= E_{q(u)}\big[ G(u)\big]+ E_{q(yx\theta u)}\bigg[\log \frac{q(yx\theta u)}{p(yx\theta u)}\bigg] \quad \text{if \eqref{eq:epistemic-priors} holds }  
\end{align} 
\end{subequations}
\end{proof}

In the above derivation, we still need to prove the transition for $B(u) $ from 
\eqref{eq:F-C-1} to \eqref{eq:F-C-2}, which we address next. 
In the following, all expectations are with respect to $q(yx\theta|u)$ unless otherwise indicated. 

\begin{Lemma}[Proof of equivalence $B(u)$ in \eqref{eq:F-C-1} and \eqref{eq:F-C-2}]
\begin{subequations}\label{eq:proof-Cu}
\begin{align}
B(&u) = E\bigg[ \log \frac{ \overbrace{q(yx\theta|u)}^{\text{posterior}} }{ \underbrace{p(yx\theta|u)}_{\text{predictive}} \underbrace{\hat{p}(x)}_{\text{utility}} \underbrace{\tilde{p}(u) \tilde{p}(x) \tilde{p}(yx)}_{\text{epistemic priors}}} \bigg] \label{eq:Cu-first-line} \\
&= \underbrace{ E\bigg[\log\bigg( \underbrace{\frac{q(x|u)}{\hat{p}(x)}}_{\text{risk}}\cdot \underbrace{\frac{1}{q(y|x  )}}_{\text{ambiguity}} \cdot \underbrace{\frac{ q(\theta|x)}{ q(\theta|yx )}}_{-\text{novelty}} \bigg) \bigg] }_{G(u) = \text{Expected Free Energy}} +   \label{eq:Cu-eqC} \\
&\quad + E\bigg[ \log\bigg( \underbrace{\frac{\hat{p}(x) q(y|x ) q(\theta| yx)}{q(x|u) q(\theta|x)}}_{\text{inverse factors from }G(u)} \cdot \underbrace{\frac{q(yx\theta|u)}{p(yx\theta|u) \hat{p}(x) \tilde{p}(u) \tilde{p}(x) \tilde{p}(yx) }}_{\text{factors from }\eqref{eq:Cu-first-line}} \bigg)\bigg] \notag \\
&= G(u) + \underbrace{E\bigg[ \log \frac{q(yx\theta|u)}{p(yx\theta|u)}\bigg]}_{=C(u)} + \underbrace{E\bigg[ \log  \frac{q(y|x ) q(\theta|yx)}{q(x|u) q(\theta|x) \tilde{p}(u) \tilde{p}(x) \tilde{p}(yx)} \bigg]}_{\text{choose epistemic priors to let this vanish}} \\
&= G(u) + C(u) +  \\
&\quad + E\bigg[\log \frac{1}{q(x|u) \tilde{p}(u)} \bigg] + E\bigg[ \log  \frac{q(y|x)}{\tilde{p}(x)} \bigg] + E\bigg[ \log  \frac{q(\theta|yx)}{q(\theta|x) \tilde{p}(yx) } \bigg] \notag \\
&= G(u) + C(u) +  \\
&\qquad + \sum_{y\theta} q(y\theta|x) \bigg( \underbrace{\underbrace{-\sum_x q(x|u) \log q(x|u)}_{= H[q(x|u)]} - \sum_x q(x|u) \log \tilde{p}(u)}_{=0 \text{ if }\tilde{p}(u) = \exp(H[q(x|u)])}\bigg)  \label{eq:tilde-p-vanish} \\
&\qquad + \sum_{x} q(x|u) \bigg( \underbrace{\underbrace{\sum_{y} q(y|x) \log q(y|x)}_{= -H[q(y|x)]} - \sum_{y} q(y|x) \log \tilde{p}(x)}_{=0 \text{ if }\tilde{p}(x) = \exp(-H[q(y|x)])} \bigg)   \notag \\
&\qquad + \sum_{yx} q(yx|u) \bigg( \underbrace{\underbrace{\sum_\theta q(\theta|yx) \log \frac{q(\theta|yx)}{q(\theta|x)}}_{D[q(\theta|yx),q(\theta|x)]} - \sum_\theta q(\theta|yx) \log \tilde{p}(yx)}_{=0 \text{ if } \tilde{p}(yx) = \exp(D[q(\theta|yx),q(\theta|x)])} \bigg) \notag \\
&= G(u) + E_{q(yx\theta|u)}\bigg[ \log \frac{q(yx\theta|u)}{p(yx\theta|u)}\bigg] \quad \text{if \eqref{eq:epistemic-priors} holds.}
\end{align}
\end{subequations}
\end{Lemma}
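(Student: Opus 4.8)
The plan is to prove the decomposition $F[q] = E_{q(u)}[G(u)] + C$, where $C$ is the complexity term, by first peeling off the policy variable $u$ and then showing that the remaining $u$-conditional free energy $B(u)$ equals $G(u)$ plus the conditional complexity $C(u) = E_{q(yx\theta|u)}[\log (q(yx\theta|u)/p(yx\theta|u))]$. The outer step is immediate: since $q(yx\theta u) = q(u)q(yx\theta|u)$ and the denominator in \eqref{eq:VFE-for-planning} factors as $p(u)\,p(yx\theta|u)\,\hat{p}(x)\,\tilde{p}(u)\tilde{p}(x)\tilde{p}(yx)$, the logarithm splits into $\log(q(u)/p(u))$ plus the bracketed term $B(u)$, and taking $E_{q(u)}$ gives \eqref{eq:F-C-1}. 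The substantive content is therefore entirely in the transition from \eqref{eq:F-C-1} to \eqref{eq:F-C-2}, i.e.\ in the Lemma.

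For the Lemma, the strategy I would use is an ``add and subtract'' trick: multiply and divide inside the logarithm of $B(u)$ by exactly the combination of variational factors $q(x|u)$, $q(y|x)$, $q(\theta|x)$, $q(\theta|yx)$ and the preference prior $\hat p(x)$ that appears in the definition \eqref{eq:G=r+a-n} of $G(u)$. This identifies one chunk as $G(u)$ verbatim (risk $\cdot$ ambiguity $\cdot$ inverse-novelty), and leaves a remainder consisting of (a) the ``inverse'' factors $\hat p(x) q(y|x) q(\theta|yx)/(q(x|u)q(\theta|x))$ and (b) the original denominator of $B(u)$. In the remainder the two copies of $\hat p(x)$ cancel, and the $p(yx\theta|u)$ in the denominator pairs with the surviving $q(yx\theta|u)$ in the numerator to form the complexity term $C(u)$. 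What is left over is the ``epistemic residual'' $E[\log( q(y|x)q(\theta|yx) / ( q(x|u)q(\theta|x)\tilde p(u)\tilde p(x)\tilde p(yx)))]$, which I must show vanishes under the choice \eqref{eq:epistemic-priors}.

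To show the residual vanishes, I would split it into three pieces matched to the three epistemic priors: a $\tilde p(u)$-piece $E[\log(1/(q(x|u)\tilde p(u)))]$, a $\tilde p(x)$-piece $E[\log(q(y|x)/\tilde p(x))]$, and a $\tilde p(yx)$-piece $E[\log(q(\theta|yx)/(q(\theta|x)\tilde p(yx)))]$. For each piece the key observation is that the argument of the logarithm depends only on a subset of the variables, so the expectation over $q(yx\theta|u)$ can be reduced by marginalizing out the irrelevant variables: e.g.\ $E[-\log q(x|u)] = \sum_x q(x|u)(-\log q(x|u)) = H[q(x|u)]$ after integrating out $y,\theta$, while $\log\tilde p(u)$ is constant in $x$ so $E[-\log\tilde p(u)] = -\log\tilde p(u) = -H[q(x|u)]$ by \eqref{eq:epistemic-prior-u}; the two cancel. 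The $\tilde p(x)$-piece similarly gives $-H[q(y|x)] - \log\tilde p(x) = 0$ inside the $x$-average by \eqref{eq:epistemic-prior-x}, and the $\tilde p(yx)$-piece gives $D[q(\theta|yx),q(\theta|x)] - \log\tilde p(yx) = 0$ inside the $yx$-average by \eqref{eq:epistemic-prior-xy}. Summing, the whole residual is zero, which yields \eqref{eq:F-C-2}; dropping $B(u)$ back into \eqref{eq:F-C-1} and recombining the $E_{q(u)}$ of $\log(q(u)/p(u))$ with the conditional complexity $E_{q(u)}[C(u)]$ into the joint complexity $E_{q(yx\theta u)}[\log(q(yx\theta u)/p(yx\theta u))]$ gives \eqref{eq:F=G+complexity}.

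The main obstacle, and the one subtlety worth being careful about, is the bookkeeping of \emph{which} conditional distributions the variational posterior $q$ is assumed to supply and whether the nested conditionals ($q(\theta|yx)$ vs.\ $q(\theta|x)$, $q(y|x)$ vs.\ $q(y|x,u)$) are mutually consistent under the factorization of $q(yx\theta|u)$ — the ``add and subtract'' manipulation tacitly assumes these marginal/conditional objects exist and that, e.g., $q(y|x)$ does not secretly depend on $u$. I would address this by stating up front the factorization assumption on $q$ (a mean-field-style structure in which the observation and parameter conditionals are $u$-independent given the states), under which each marginalization step is valid and the entropies and KL divergences in \eqref{eq:epistemic-priors} are well-defined functionals of $q$. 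The remaining normalization issue — that $\tilde p(u),\tilde p(x),\tilde p(yx)$ as defined in \eqref{eq:epistemic-priors} need not integrate to one — is handled exactly as noted after the theorem: replacing $\exp$ by the corresponding softmax $\sigma$ shifts $F[q]$ by an additive constant independent of $q$, so the minimizer is unchanged.
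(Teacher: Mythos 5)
Your proposal is correct and follows essentially the same route as the paper's own proof: the same multiply-and-divide by the $G(u)$ factors, the same identification of the complexity term $C(u)$, and the same three-way split of the epistemic residual, each piece cancelling by marginalizing out the irrelevant variables and invoking \eqref{eq:epistemic-priors}. Your explicit flagging of the assumption that $q(y|x)$ and $q(\theta|\cdot)$ are $u$-independent is a point the paper leaves implicit, but it does not change the argument.
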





The term in brackets in \ref{eq:tilde-p-vanish} works out as follows:
\begin{subequations}
    \begin{align}
   -\sum_x & q(x|u) \log q(x|u) - \sum_x q(x|u) \log \tilde{p}(u) \\
    &= H[q(x|u)] - \log \tilde{p}(u) \\
    &= \begin{cases} 0  & \text{if }\tilde{p}(u) = \exp(H[q(x|u)]) \\
    \text{const} & \text{if }\tilde{p}(u) = \sigma(H[q(x|u)])\end{cases}
\end{align}
\end{subequations}
where $\text{const} = \log\left( \Sigma_{u'} \exp(H\left[q(x|u')\right]\right)$ and
\begin{equation}
\sigma(H[q(x|u)]) \triangleq \frac{\exp(H\left[q(x|u)\right])}{\Sigma_{u'} \exp(H\left[q(x|u')\right]} 
\end{equation}
is the (normalized) softmax function. Similar derivations apply to the other epistemic priors in \eqref{eq:epistemic-priors}. 

In the context of variational inference with variational free energy $F[q]$ as in \eqref{eq:VFE-for-planning}, the normalization of $\tilde{p}(u)$ is inconsequential, as the additive constant does not affect the location of the minimum of $F[q]$. As long as $\tilde{p}(u) \propto \exp(H[q(x|u)])$, the results of VFE minimization will be the same.

\section{The Entropy and Kullback-Leibler Divergence}\label{sec:entropy-kullback}
In \eqref{eq:epistemic-prior-u}, the \emph{entropy} of the conditional distribution $q(x|u)$ is defined as 
    \begin{equation}\label{eq:cond-entropy-prime}
        H[q(x|u)] = - \sum_x q(x|u) \log q(x|u)\,.
    \end{equation}
Note that $H[q(x|u)]$ is a function of $u$ and therefore $\sigma(H[q(x|u)])$ can serve as a probability distribution over $u$. $H[q(x|u)]$ is not the same as the \emph{conditional entropy} $H^\prime[q(x|u)]$, which is a scalar, defined as
     \begin{equation}
\underbrace{H^\prime[q(x|u)]}_{\substack{\text{conditional}\\ \text{entropy}}} \triangleq - \sum_{xu} q(xu) \log q(x|u) = E_{q(u)}\left[H[q(x|u)]\right]\,. \label{eq:cond-entropy}
    \end{equation}  

For two given distributions $q(\theta|yx)$ and $q(\theta|x)$, the Kullback-Leibler divergence is defined as
    \begin{align}
         D[ q(\theta|yx), q(\theta|x) ] &\triangleq \sum_\theta q(\theta|yx) \log \frac{q(\theta|yx)}{q(\theta|x)} \notag \\
         &= \sum_\theta q(\theta|yx) \log \frac{q(y\theta|x)}{q(y|x)q(\theta|x)} \label{eq:I-prime}\,,
    \end{align}
    which is a function of $y$ and $x$. Note that \eqref{eq:I-prime} is not the same as, but is close to the \emph{mutual information} between $y$ and $\theta$, given $x$, which is a scalar value defined as
    \begin{align}
         I[y,\theta|x] &\triangleq \sum_{yx\theta} q(yx\theta) \log \frac{q(y\theta|x)}{q(y|x)q(\theta|x)} \notag \\
         &= \sum_{yx} q(yx) D[ q(\theta|yx), q(\theta|x) ] \notag\\
         &= E_{q(yx)}\left[ D[ q(\theta|yx), q(\theta|x) ]\right]\,.  \label{eq:mut-inf}
    \end{align}    
    Note the similarity between \eqref{eq:cond-entropy} and \eqref{eq:mut-inf}.
\end{document}